\begin{document}

\title{A general-purpose method for applying Explainable AI\ 
for Anomaly Detection}
\titlerunning{Explained Anomalies}
%
\author{John Sipple\inst{1,2} \and
Abdou Youssef\inst{2}}
\authorrunning{Sipple, J., Youssef, A.}
%
\institute{Google, Mountain View, California, USA \and
The George Washington University, Washington DC, USA \\
\email{sipple@google.com}
\email{\{jsipple, ayoussef\}@gwu.edu}}
\maketitle              
\begin{abstract}
The need for explainable AI (XAI) is well established but relatively little has been published outside of the supervised learning paradigm. This paper focuses on a principled approach to applying explainability and interpretability to the task of unsupervised anomaly detection. We argue that explainability is principally an \emph{algorithmic} task and interpretability is principally a \emph{cognitive} task, and draw on insights from the cognitive sciences to propose a general-purpose method for practical diagnosis using explained anomalies. We define Attribution Error, and demonstrate, using real-world labeled datasets, that our method based on Integrated Gradients (IG) yields significantly lower attribution errors than alternative methods.

\keywords{Anomaly Detection  \and Interpretability \and Explainable AI}
\end{abstract}
\title{A general-purpose method for applying Explainable AI \ 
to anomaly detection to enable the expert to discover, \ 
diagnose and treat device defects}

\section{Anomaly Detection and Interpretability}
\label{intro}
Imagine you are a technician that maintains a system with thousands of networked devices. Chances are high that at any given time, somewhere in the fleet there are faulty devices that require your attention. As a technician, you would like an anomaly detector to provide a rich explanation about the symptoms, allowing you to detect, diagnose, prioritize and fix faulty devices. This paper is motivated by the unmet need to provide the technician \textit{explained anomalies}. We combine insights from the cognitive sciences with explainable AI (XAI), and propose a general-purpose approach to aid the technician in detecting the anomaly and understanding the fault behind the anomaly. We show comparative results with sensor failure on a Variable Air Volume (VAV) device, and a fuel pressure failure on a general aviation aircraft engine.  



\emph{Anomaly Detection} (AD) is the machine learning task of detecting observations that do not conform to expected or normal behavior \cite{Chandola2009}, \cite{tax2002}, \cite{schoelkopf2001}, \cite{ruff2020}. AD is susceptible to both false positive and false negative errors \cite{Aggarwal2017}. Adjudicating false positive errors is often time consuming and may even make an AD solution ineffective. A nondescript anomaly score provides little insight to understanding what caused the anomaly and choosing the best treatment. We hypothesize that integrating XAI techniques with AD can reduce the human workload in processing a stream of anomalies.  

Relatively little about explainable AD has been published. An explainable AD method was developed by combining an autoencoder with the SHAP explainability method \cite{antwarg2020}. DIFFI \cite{Carletti2020} is an explainability technique for Isolation Forest (IF) \cite{Liu2008} that generates attributions based on dimensions that contribute significantly to isolating anomalous points and causing data imbalances by random data splits. MADI \cite{sipple2020a} combines a neural network classifier that distinguishes between observed and randomized samples, and uses Integrated Gradients (IG) \cite{Sundararajan2017}. The One-Class Deep Taylor Decomposition (OC-DTD), uses a One-Class Support Vector Machine (OC-SVM) to train (“neuralize”) a feed-forward neural network \cite{ruff2020}. Then, it applies Deep Taylor Decomposition \cite{bach2015} that yields variable attributions for anomalies \cite{kauffmann2020}. Explaining anomalies in image data, FCDD \cite{liznerski2021}, applies a convolutional upsampling technique to create an heatmap of anomalous regions detected by a neural network variant of OC-SVM.

\label{sec:contributions}
This work makes the following contributions to the field of explainable AD: \textbf{a.} creates a clear distinction between explainability and interpretability; \textbf{b.} proposes a methodology that leads to meaningful anomaly explanations combining the cognitive sciences with XAI methods; and \textbf{c.} defines Attribution Error and provides two real-world datasets for evaluating anomaly explanations.

\section{Conventions and Basic Definitions}
\label{sec:conv}
A \textit{system} is an observable, complex, and stateful process or object, which may be biological, physical, social, economic, etc. We define \textit{expert} broadly as a stakeholder of the system who applies a knowledge base to interpret the explanation to achieve a specific goal. Device $s$ in system $S$ generates periodic observations $x_{s,t}$ in $\mathbb{R}^D$ at time $t$ , where we omit subscripts $s$ and $t$ for simplicity. The baseline set, or normal points, are indicated with an apostrophe, $X'={x'_1, x'_2,  ...}$. An anomaly detector is a classifier model $F$ that scores each observation between $anomalous$ $(0)$ and $normal$ $(1)$: $F:\mathbb{R}^D\rightarrow{[0,1]}$. An \textit{explanation function}, $B:\mathbb{R}^D\rightarrow{[0,1]^D}$ maps an observation $x$ to an explanation, \textit{attribution}, or \textit{blame}, $B(x)=b$, where $b_d\in [0,1]$ and $\sum_{d=1}^{D}b_d\leq 1$. With respect to a specific baseline point $x'$, we shall include the baseline point as an argument, $B(x,x')$. An \textit{interpretation function} is a mapping from explanations to the powerset of discrete \emph{diagnostic conditions}: $I:[0,1]^D\rightarrow \mathcal{P}(C)$, where $C$ is a set of all diagnostic conditions known to the expert, $C={c_1,c_2, ...}$. 


\section{Explainability vs. Interpretability}
\label{sec:expl_vs_interp}
Recent works on XAI have proposed various definitions of explainability and interpretability. \cite{gilpin2018} describes explainability as models that are able \emph{to describe the internals of a system that is understandable to humans}. \cite{Lipton2016a} argues that interpretability reflects on \emph{trust, causality, transferability, informativeness, and fair and ethical decision making}. \cite{doshivelez2017} define interpretability as \emph{the ability to explain or to present in understandable terms to humans}. \cite{miller2021} equates interpretability with explainability as a mode in which an observer can understand a decision. According to \cite{Broniatowski} explainability is the model’s ability \emph{to provide a description of how a model’s outcome came to be, and interpretability refers to a human’s ability to make sense, or derive meaning, from a given stimulus so that the human can make a decision}. Similar to \cite{Broniatowski}, we propose that explainability and interpretability are two distinct ideas. The communication model with an information source, a message, a transmitter, a noisy channel, a receiver, and a destination \cite{ShannonWeaver49} provides a framework for describing the difference between explainability and interpretability, Table \ref{tbl:explainabilty-interpretability}. The information source are the devices that stream multidimensional observations. The transmitter is the anomaly detector that generates an anomaly score and an explanation. The receiver is the \emph{expert}, who applies her knowledge of the world to make judgments of the anomaly. Hence, \emph{Explainability} is the \emph{algorithmic task} of extracting an explanation from the model, and \emph{interpretability} is the \emph{cognitive task} of combining the explanation with the expert’s knowledge base, and forming a hypothesis. 
\begin{table}[t]
\caption{Differences between Explainability and Interpretability.}
\label{tbl:explainabilty-interpretability}
\setlength{\arrayrulewidth}{0.1mm}
\renewcommand{\arraystretch}{1.3}
\begin{center}
\begin{small}
\begin{tabular}{ |p{6.0cm}|p{6.0cm}| }
\hline
\textbf{Explainability}:\textit{Generate a prediction and explanation.} &\textbf{Interpretability}: \textit{Align the explanation with a knowledge base, and complete a task.}\\
\hline
Role of Transmitter  & Role of Receiver \\
\hline
Algorithmic Task & Cognitive Task\\
\hline
Expressed by Model & Generated and used by Expert  \\
\hline
Spatial, Temporal Correlation & Root cause and Causation \\
\hline
Observation to Attribution  & Attribution to Diagnostic Condition \\
\hline
Expresses symptoms & Diagnoses and treats\\
\hline
\end{tabular}
\end{small}
\end{center}
\end{table}
\subsection{Explainability}
\label{sec:explainability}
There are several ways to describe methods of XAI. \emph{Global methods} provide general explanations generally true about all model predictions. \emph{Local methods}  \cite{adadi2018}, \cite{samek2017} provide explanations for individual observations. Since they are specific to individual observations, \emph{local explainability} generally is more suitable for interpreting anomalies. Some local model explainability techniques, such as LIME \cite{ribeiro2016} and SHAP \cite{lundberg2017}, treat models as black boxes and perturb the inputs and observe the prediction and can be applied to any classifier model, and are called \emph{model-agnostic}. In contrast, \emph{model-specific} explainability techniques often place specific requirements on the model, such as the gradients or network architecture, that make them compatible. Examples include IG, Layer-wise Relevance Propagation (LRP), and DeepLIFT.  

\emph{Variable attribution}, or \emph{blame}, $b_d=B(x_d)$, quantifies the importance of the value on dimension $d$, $x_d$, in directionality (sign) and weight (magnitude) on a prediction. In non-linear models, these variables are local approximations of a hyperplane fit along the steepest gradient in feature space. Some local methods, such as IG \cite{Sundararajan2017} and Deep LIFT \cite{shrikumar2017}, require a neutral baseline point. 

\subsection{Interpretablity}
\label{sec:interpretablity}
Interpreting the model’s explanation requires the expert to align the observation, prediction, and explanation with the system’s context and a relevant knowledge base \cite{ricoeur1972}. To be successful, the expert must associate the model’s explanation with a particular cause or fault, and then decide to take a corrective action.  



\textbf{Anomaly Detection Expert} We propose there are three kindes of experts that use eplainable AI: \emph{physicus}, \emph{technicus}, and \emph{secularus}, described in Appendix \ref{appendix_expert_personas}. Technicus has a detailed knowledge base of the system, and aims to alter or influence the system in response to known conditions \cite{samek2017}. The explanations should describe the anomaly in its original state and enable the technicus to easily choose the most suitable treatment for the anomaly. The natural objective of XAI is to generate explanations that enable effective interpretation for the expert. Therefore, it is helpful to consider how the human mind recognizes, generalizes and categorizes, and reasons about causality in order to better design explanations. 

\textbf{Generalization.}\label{generalization} Empirical studies in cognitive sciences revealed that humans assign objects to categories based on some notion of distance or dissimilarity between multidimensional stimuli \cite{nosofsky1986}. One prominent generalization theory developed by \cite{shepard1957}, \cite{sheperd1987} suggested that distance can be well approximated with Euclidean L2 or City Block L1 distance measures, and stimuli are internalized in some internal metric space called \textit{psychological spaces}. Experiments with human subjects also indicated that similarity generally follows an exponential decay with distance. 

\textbf{Prototypes and Exemplars.}\label{sec:prototypes_exemplars} It is well established that when categorizing stimulus of a previously unseen object, humans will tend to use previously observed objects as a reference for making their decision. This gave rise to Prototype and Exemplar Theories \cite{navarro2007} \cite{lieto2017}, where the former suggests that humans remember an anchoring object (i.e., a prototype) when categorizing, and the latter suggests that humans tend to retain multiple examples in memory for a category or concept. 

\textbf{Contrastive Explanations.} \label{sec:constrastive_explanations} One of the most important conclusions from a review of explanations for AI drawn from the social sciences, was that good explanations are \emph{contrastive} \cite{miller2017}. Humans perform better in understanding explanations when presented with a contrastive, counterfactual event, and prefer explanations with \textit{why did event P happen, instead of counterfactual event Q}.

\section{An Approach for Interpreting Anomalies}
\label{sec:approach}
In this section, we combine AD, explainability and interpretability concepts. The proposed approach provides an explanation with both a contrastive baseline point $x'$ and a variable attribution or blame, $B(x,x')$. We assume that we have already trained a differentiable anomaly detector $F(x)=[0,1]$, from which gradient $\nabla{F}$ can be computed. The historic data for the unlabeled training dataset $X$ is assumed to be available, and is representative of new observations from system $S$. Each of the five steps is presented as a separate subsection next.

\subsection{Choose an exemplar baseline set}
As discussed in section \ref{sec:expl_vs_interp}, research in psychology suggests that humans retain and comprehend concepts with the support of exemplars, or reference points. We select baseline reference points from the training data that represent normal. The normal baseline sample from the training set have highest scores: $F(x)\approx1$. Normal observations are suitable candidates for  the baseline set. 

\begin{proposition}
\emph{If the training data is a statistically representative sample of the inference set, then points with the highest normal scores are suitable exemplars. }
\end{proposition}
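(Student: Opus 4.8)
The plan is to first pin down the two informal phrases in the statement, since the proposition as phrased is a modeling claim rather than a purely formal one. I would read \emph{statistically representative} as the assumption that the training set $X$ and the inference stream are both drawn i.i.d.\ from a common data-generating law $p$ on a bounded subset of $\mathbb{R}^D$, so that the empirical measure of $X$ converges weakly to $p$ and, in particular, places mass arbitrarily close to $1$ on every neighborhood of the support of $p$. I would read \emph{suitable exemplar} through the cognitive-science desiderata of Section~\ref{sec:interpretablity}: a point is a suitable exemplar if (i) it lies in a high-density region of the normal class, so that it is prototypical in the sense of prototype/exemplar theory, and (ii) it functions as a neutral contrastive baseline, i.e.\ it is maximally ``normal'' from the detector's point of view. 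With these reductions the proposition becomes the assertion that $\arg\max_{x\in X} F(x)$ concentrates, with high probability, on the high-density core of $p$.

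The key bridge is the density interpretation of the detector score. Following the negative-sampling construction used to obtain a differentiable $F$ (e.g.\ MADI~\cite{sipple2020a}), $F$ is trained to separate observed points (law $p$) from points drawn from a fixed reference law $q$ supported on the same bounded set; at the Bayes-optimal classifier one has
\begin{equation}
F^{\star}(x) \;=\; \frac{p(x)}{p(x) + \lambda\, q(x)},
\end{equation}
where $\lambda$ is fixed by the class priors. Hence $F^{\star}$ is a strictly increasing function of the density ratio $p(x)/q(x)$, and when $q$ is uniform on the support it is strictly increasing in $p(x)$ itself. Consequently $\{x : F^{\star}(x)\approx 1\}$ is exactly a superlevel set $\{x : p(x)\ge\tau\}$ of the normal density, which establishes condition~(i); and since the same points attain near-maximal normal score, condition~(ii) holds simultaneously, for the population version of the claim.

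The remaining work is the two approximation arguments that turn the population statement into the finite-sample statement actually used by the method. First, replacing $F^{\star}$ by the trained detector $F$: assuming $F$ is $\varepsilon$-close to $F^{\star}$ in sup norm (a consequence of classifier consistency plus sufficient capacity) and that $p$ is continuous, the superlevel sets of $F$ and of $p$ differ only in a thin shell, so selecting points with $F(x)\approx 1$ still lands inside a high-density region. Second, replacing the population argmax by $\arg\max_{x\in X}F(x)$: by representativeness the sample places $\Omega(n)$ points in any fixed high-density neighborhood, so with probability at least $1-\delta$ the empirical maximizer has density within a vanishing gap of $\sup_x p(x)$, with a rate quantified by a standard union-bound / Glivenko--Cantelli estimate. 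I expect this last bridge --- controlling jointly the learning error in $F$ and the sampling fluctuation of the argmax under the right smoothness hypothesis on $p$ --- to be the main obstacle; by comparison the density-ratio identity and the reduction of ``suitable'' to conditions~(i)--(ii) are routine once the definitions are fixed.
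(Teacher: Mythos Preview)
The paper does not prove this proposition. It is stated as an informal design principle---one of several ``Propositions'' in Section~\ref{sec:approach} that are offered as methodological guidance rather than as theorems---and is followed immediately by a discussion of multimodality and the clustering-based baseline selection of Appendix~\ref{appendix:select_baseline}. Only Theorems~\ref{thm:equivalent_attributions} and~\ref{thm:ig_proportionality} receive proofs; the propositions are left unargued.

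Your proposal therefore goes well beyond the paper: you are supplying a formalization the authors never attempted. The route you take is reasonable---fixing the negative-sampling construction so that the Bayes-optimal score $F^\star(x)=p(x)/(p(x)+\lambda q(x))$ is monotone in the density ratio, and then reading ``suitable exemplar'' as membership in a density superlevel set---and it does connect high normal score to high data density under those assumptions. Two caveats are worth flagging. First, the proposition as written is detector-agnostic, whereas your argument hinges on a specific generative model for $F$; for other detectors (autoencoder reconstruction error, OC-SVM, Isolation Forest) the monotone-in-$p$ link need not hold, so you are proving a narrower claim than the paper states. Second, your plan would not by itself recover the multimodal caveat the paper immediately raises: even if every high-$F$ point lies in a high-density region, the top-$k$ set can miss an entire mode when mass is unbalanced, which is precisely why the paper moves to clustering. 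If you want your formalization to match the paper's intent, you should weaken ``suitable exemplars'' from ``the argmax set'' to ``candidate exemplars that still require mode-balanced subsampling,'' which is closer to how the authors actually use the claim.
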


In many complex systems, devices operate in different configurations (heating/cooling, idle/active, takeoff/landing, etc.), and distribute observations across multiple modes. Multiple modes occur when these device configurations generate probability mass functions with multiple local maxima. In cases where there are multiple distinct baseline modes, and when the number of normal points is unbalanced across modes, as illustrated with modes $A$ and $B$ in Figure \ref{fig:modes} (left), there is no guarantee that a baseline will represent all modes. Furthermore, it is usually not efficient to oversample baseline points from dense regions. For these reasons, we propose a clustering-based approach for selecting $N$ points randomly from each cluster, presented in Appendix \ref{appendix:select_baseline}.



\begin{figure}[htb]
  \centering
  \begin{subfigure}
  \centering
    \includegraphics[width=0.3\columnwidth]{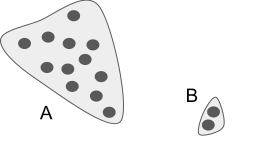}
  \end{subfigure}
  \hfill
  \begin{subfigure}
   \centering
    \includegraphics[width=0.5\columnwidth]{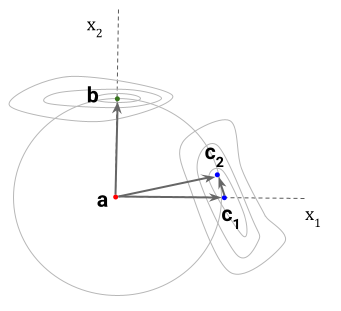}
  \end{subfigure}
  \caption{(Left) Multimodal distribution of the data, where mode $A$ has more observations than mode $B$. (Right) Selection of baseline points for anomaly $a$ with three equidistant point from two modes.}
  \label{fig:modes}
\end{figure}

\subsection{Choose a measure of dissimilarity}
\label{sec:minkowski_distance}
We must apply distance function for 1. selecting the baseline set, and 2. for selecting the baseline point for each anomaly. As discussed in Section \ref{sec:interpretablity}, based on empirical evidence on human categorization of variables are correlated in complex systems, adopt a Euclidean L2 or City Block L1 distance function. 

\begin{proposition}
\label{distance}
\emph{Euclidean L2 or City Block L1 are suitable for choosing a contrastive baseline point. }
\end{proposition}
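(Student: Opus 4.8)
The plan is to read ``suitable'' as a conjunction of three requirements that the distance function must satisfy when it is used (as stated in Section~\ref{sec:minkowski_distance}) both to select the exemplar baseline set and to pick the contrastive baseline point $x'$ for an anomaly $a$: (i) it must agree with how the human \emph{expert} perceives similarity between observations; (ii) it must single out a genuinely \emph{contrastive} counterfactual, i.e. a ``nearest normal'' foil for the anomaly; and (iii) it must be compatible with the attribution step $B(x,x')$ that consumes the chosen baseline. I would then show that the Minkowski family at $p=1$ (City Block, $L_1$) and $p=2$ (Euclidean, $L_2$) meets all three, while common alternatives (cosine similarity, $L_\infty$, or a learned metric) fail at least one, so the proposition follows once the three requirements are made precise.

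For requirement (i) I would invoke the generalization results cited in Section~\ref{sec:interpretablity}: Shepard's universal law and Nosofsky's categorization experiments establish that human subjects internalize multidimensional stimuli in a psychological space whose perceived dissimilarity is well approximated by $L_1$ or $L_2$, with similarity decaying exponentially in that distance. Hence selecting $x'$ to minimize $L_1$ or $L_2$ distance to $a$ returns precisely the normal case the expert would judge most similar, which is the object on which the interpretation function $I$ is meant to act; this step is an argument from the cited literature rather than a derivation. For requirement (ii) I would use Miller's conclusion that good explanations are contrastive and minimal (``why $P$ instead of $Q$'' with $Q$ the closest foil): minimizing a small-$p$ Minkowski distance yields the baseline differing from $a$ in the fewest and smallest coordinates, with $L_1$ biased toward sparse, easy-to-read discrepancies and $L_2$ rotation-invariant and therefore appropriate when the sensor variables are correlated, as they are in the complex systems we target. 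I would also note that $L_1$ and $L_2$ are proper metrics (non-negativity, symmetry, triangle inequality), so ``closeness'' behaves consistently across the clustering-based baseline selection and the per-anomaly selection.

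For requirement (iii) I would tie the choice to Integrated Gradients, the attribution method used downstream. IG integrates $\nabla F$ along the straight segment $\{\,x' + \alpha\,(a - x') : \alpha \in [0,1]\,\}$, which is the $L_2$-geodesic, so the Euclidean choice is the geometrically natural one for IG; moreover, by IG's completeness axiom the attributions sum to $F(a) - F(x')$, and picking the $L_2$-nearest normal baseline keeps that quantity small, bounding the magnitude the explanation must redistribute and, in turn, limiting the Attribution Error defined later. The $L_1$ choice is retained as the robust alternative when coordinates are on incommensurable scales or heavy-tailed.

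The main obstacle, which I would state openly, is that ``suitable'' is not an \emph{a priori} formal predicate: a fully rigorous version would fix an objective — for instance expected Attribution Error, or an interpretability cost combining sparsity and perceptual fidelity — and prove that an $L_1$- or $L_2$-nearest baseline is optimal or near-optimal for it. Lacking such an objective at this point in the paper, I would present the result as a principled design choice: make requirements (i)--(iii) precise enough to be checked, verify $L_1$ and $L_2$ against each, and defer the quantitative optimality claim to the empirical Attribution-Error comparison in the experiments.
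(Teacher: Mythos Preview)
The paper does not actually prove this proposition; it is offered as a design principle, and the entire justification is the sentence preceding it together with the pointer back to Section~\ref{sec:interpretablity} (the Shepard/Nosofsky results on psychological spaces and $L_1$/$L_2$ dissimilarity). Your requirement~(i) is precisely that argument, so on the core point you and the paper coincide. Your requirements~(ii) and~(iii) are your own additions: the paper does invoke Miller's contrastive explanations and the IG straight-line path, but it deploys them in support of \emph{other} propositions (the choice of baseline point and Proposition~\ref{prop:path} on paths, respectively), not this one. So your argument is strictly more elaborate than the paper's single-sentence appeal to cognitive science, and your closing caveat---that ``suitable'' is a design predicate rather than a formal one---is exactly the right reading of how the paper treats it.

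One technical slip in~(iii) is worth fixing: you claim that choosing the $L_2$-nearest baseline ``keeps $F(a)-F(x')$ small'' and thereby bounds the Attribution Error via completeness. But every admissible baseline has $F(x')\approx 1$ by construction (Section~\ref{sec:approach}), while $F(a)\approx 0$, so $F(a)-F(x')\approx -1$ for \emph{any} baseline; geometric proximity of $x'$ to $a$ does not shrink this sum. The observation that IG integrates along the Euclidean segment is fine, but the completeness-based bound does not follow from nearness, so drop that clause or replace it with a different compatibility argument.
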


\subsection{Select a baseline point} 
\label{sec:nearest_baseline}
Guided by Occam’s Razor, we seek the simplest, most efficient explanation. Transforming an anomalous point into any baseline point yields an explanation, regardless of its distance from the anomaly. Since each dimension has been normalized, an equal-length displacement in any direction incurs the same cost. Therefore, a baseline point from the nearest mode leads to the most efficient explanation. We apply explainability methods that use baselines as exemplars and choose the nearest exemplar, such as IG. The distance between each anomalous point and the baseline can be computed with the distance function, and the point with the minimum distance is chosen as the representative baseline point.

\begin{definition}
\label{equivalent_attributions}
Two baseline points $x'_1$, $x'_2$ result in \emph{equivalent attributions} if for any anomalous point $x$, the blame attributions $B(x,x'_1)$ and $B(x,x'_2)$ map to the same diagnostic condition, i.e., $I(B(x,x'_1))=I(B(x,x'_2))$.
\end{definition}

As will be shown in Theorem \ref{thm:equivalent_attributions} below, two baseline points drawn from the same mode and separated by a small perturbation result in the same interpretation, i.e., result in equivalent attributions. This justifies our next proposition:

\begin{proposition}
\label{small_perturbation}
A small perturbation in the baseline point yields an equivalent attribution. 
\end{proposition}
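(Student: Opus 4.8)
The plan is to derive the proposition from two ingredients: the continuity of the explanation function $B$ in its baseline argument, and a local-stability assumption on the interpretation function $I$. Since we commit to a gradient-based attribution method (IG), $B(x,x')$ is obtained by integrating $\nabla F$ along the straight-line path from $x'$ to $x$; provided $F$ is continuously differentiable, as assumed in Section~\ref{sec:approach}, this integral depends continuously on the endpoint $x'$. Hence for every $\varepsilon>0$ there is a $\delta>0$ such that $\|x'_1-x'_2\|<\delta$ implies $\|B(x,x'_1)-B(x,x'_2)\|<\varepsilon$. I would state this as a lemma on the modulus of continuity of $B(x,\cdot)$ and keep the IG-specific verification brief.

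Next I would formalize the needed property of $I$. Because $C$ is a \emph{discrete} set of diagnostic conditions, $I:[0,1]^D\to\mathcal{P}(C)$ partitions the attribution simplex into finitely many preimages $I^{-1}(\{c\})$. The substantive assumption --- which matches how a technicus reasons, attaching a fault to a \emph{region} of blame patterns rather than to a single knife-edge value --- is that the attribution $b=B(x,x'_1)$ for the anomaly of interest lies in the interior of its cell, i.e.\ at positive distance $\rho>0$ from the boundaries between diagnostic regions. Choosing $\varepsilon<\rho$ in the previous paragraph then forces $B(x,x'_2)$ into the same cell, so $I(B(x,x'_1))=I(B(x,x'_2))$, which is precisely the definition of equivalent attributions (Definition~\ref{equivalent_attributions}).

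To close the argument I would combine the two steps: given the anomaly $x$ and baseline $x'_1$, let $\rho$ be the distance from $B(x,x'_1)$ to the nearest diagnostic-region boundary, take $\varepsilon=\rho/2$, and let $\delta$ be the continuity modulus of $B(x,\cdot)$ at $x'_1$; any perturbation with $\|x'_2-x'_1\|<\delta$ is then ``small'' in the sense of the proposition and yields an equivalent attribution. I would also remark that when $x'_1$ and $x'_2$ are drawn from the same mode (Section~\ref{sec:nearest_baseline}), the displacement is bounded by the intra-cluster diameter produced by the clustering procedure of Appendix~\ref{appendix:select_baseline}, which ties this proposition to Theorem~\ref{thm:equivalent_attributions}.

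The main obstacle is the second step: $I$ is defined as a cognitive mapping and is not given by an explicit formula, so ``distance to the boundary between diagnostic regions'' must be introduced as a hypothesis rather than proved. Making this precise --- for instance, positing that each $I^{-1}(\{c\})$ is open, or that $I$ is obtained by thresholding a finite family of continuous score functions on the attributions and that $x$ is not a boundary case --- is where the care is needed; the continuity of IG is comparatively routine once $F$ is assumed differentiable.
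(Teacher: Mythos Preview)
Your proposal is correct, and its first step --- continuity of $B(x,\cdot)$ for IG via the integral of $\nabla F$ --- is exactly what the paper establishes as Theorem~\ref{thm:equivalent_attributions}; the paper treats that theorem alone as the justification for the proposition and offers no separate argument for the passage from ``close attributions'' to ``same interpretation.'' Your second step, the interior-of-cell argument for $I$, makes explicit a gap the paper leaves tacit: Definition~\ref{equivalent_attributions} requires $I(B(x,x'_1))=I(B(x,x'_2))$, not merely $\|B(x,x'_1)-B(x,x'_2)\|$ small, and the paper never states any regularity hypothesis on $I$ to bridge that. So your route shares the same core (IG continuity in the baseline) but is strictly more careful; what you gain is an honest accounting of the cognitive-side assumption that is actually needed, at the cost of introducing hypotheses on $I$ that the paper never spells out. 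Your closing remark linking the admissible perturbation size to the intra-cluster diameter from Appendix~\ref{appendix:select_baseline} is also a connection the paper gestures at (``drawn from the same mode'') but does not make quantitative.
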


Before the formalization and proof of this proposition, we provide a simple illustration. Consider the two-dimensional case shown in Figure \ref{fig:modes} (right) with anomalous observation $a$. Suppose that three possible baseline points were selected: point $b$ from one mode, and points $c_1$, $c_2$ are from a different mode, and $F(a)=0$, $F(b)=F(c_1)=F(c_2)=1$. 

The attribution of $a$ with respect to $b$ will be the vector $[0, 1]$, but the attribution of $a$ with respect to $c_1$ will be $[1, 0]$. The attribution of $a$ with respect to $c_2$ will be $[1,0]$, i.e., nearly equal to the attribution with respect to $c_1$. While the attributions on $b$ and $c_1$, $c_2$, are different, as long as all the points are equidistant from $a$ and have the same classification score, they are valid attributions. 


\begin{theorem}
\label{thm:equivalent_attributions}
The reference points $c_1$ and  $c_2$ will generate \emph{equivalent attributions} if the following conditions are met: (a) the points are close $c_1\approx c_2$, (b) equidistant from anomaly $a$, and (c) have the same classification score $F(c_1)=F(c_2)$. That is,  under those conditions, the difference of attributions on each dimension will tend to zero as $c_1$ and  $c_2$ become infinitesimally close,

$\lim_{\|c_1-c_2\|\to 0} \left( IG_d(a,c_1)-IG_d(a,c_2)\right)=0$.
\end{theorem}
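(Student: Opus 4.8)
The plan is to regard the dimension-$d$ Integrated Gradients value as a function of the \emph{baseline} point and show that this function is continuous, so that moving the baseline by a vanishing amount perturbs the attribution by a vanishing amount. Recall that for a differentiable detector $F$,
\[
IG_d(a,c)\;=\;(a_d-c_d)\int_0^1 \frac{\partial F}{\partial x_d}\bigl(c+\alpha(a-c)\bigr)\,d\alpha .
\]
With the anomaly $a$ held fixed, define $\Phi_d(c):=IG_d(a,c)$. The theorem reduces to showing $\Phi_d$ is continuous at $c_1$: once that is in hand, letting $c_2\to c_1$ (which is exactly $\|c_1-c_2\|\to 0$) forces $\Phi_d(c_2)\to\Phi_d(c_1)$, hence $IG_d(a,c_1)-IG_d(a,c_2)\to 0$. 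Conditions (b) and (c) play no role in this analytic limit; they enter only in the interpretive reading discussed at the end.

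For the continuity of $\Phi_d$ I would split it into the affine prefactor $(a_d-c_d)$, which is trivially continuous in $c$, and the path integral $J_d(c):=\int_0^1 \frac{\partial F}{\partial x_d}(c+\alpha(a-c))\,d\alpha$. To control $J_d$, fix a closed ball $\overline{B}$ of radius $r$ about $c_1$; for every $c\in\overline{B}$ the segment $\{c+\alpha(a-c):\alpha\in[0,1]\}$ lies inside the compact convex hull $K$ of $\overline{B}\cup\{a\}$. Taking $\nabla F$ continuous (the regularity already implicit in using IG and in its completeness property), $\nabla F$ is uniformly continuous on $K$, so the integrand $(\alpha,c)\mapsto \frac{\partial F}{\partial x_d}(c+\alpha(a-c))$ is uniformly continuous on $[0,1]\times\overline{B}$; as $c\to c_1$ it therefore converges uniformly in $\alpha$ to $\frac{\partial F}{\partial x_d}(c_1+\alpha(a-c_1))$. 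Interchanging the limit with the integral — by uniform convergence, or by dominated convergence with the bound $\sup_K\|\nabla F\|$ — gives $J_d(c)\to J_d(c_1)$, and multiplying the two continuous factors yields continuity of $\Phi_d$ at $c_1$, which closes the argument.

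To connect the limit to Definition~\ref{equivalent_attributions}, I would note that conditions (b) equidistance from $a$ and (c) equal classification score $F(c_1)=F(c_2)$ are what qualify $c_1$ and $c_2$ as legitimate, interchangeable exemplars, so the only residual difference between the two explanations is the small displacement governed by (a). Adding the mild premise that the interpretation map $I$ is locally constant on the explanation space around $B(a,c_1)$ — i.e. robust to sufficiently small changes of the attribution vector — the componentwise convergence just established gives $I(B(a,c_1))=I(B(a,c_2))$ for $c_2$ near $c_1$, which is precisely equivalence of attributions. The main obstacle is essentially bookkeeping: the analysis step is routine \emph{provided} $\nabla F$ is continuous, so I would state that as a hypothesis (satisfied by neural detectors with smooth activations) rather than deriving it; the only genuinely non-analytic gap is the final upgrade from ``attribution difference $\to 0$'' to ``same diagnostic condition,'' which rests on the stability of $I$ — a property of the expert's knowledge base, not of the definitions — so I would flag explicitly that the limit is the provable core and the equivalence conclusion rides on that robustness assumption.
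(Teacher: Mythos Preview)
Your argument is correct and takes a genuinely different route from the paper's. You frame the statement as continuity of the map $c\mapsto IG_d(a,c)$ and obtain it cleanly via uniform continuity of $\nabla F$ on the compact convex hull $K$, which lets you pass the limit through the integral. The paper instead writes out the difference $IG_d(a,c_1)-IG_d(a,c_2)$ explicitly, invokes a ``chained integrals'' identity to split $\int_a^{c_1}$ as $\int_a^{c_2}+\int_{c_2}^{c_1}$, rearranges to isolate one term carrying the factor $(c_1-c_2)_d$ and another whose integral is over the short arc $c_2\to c_1$, and then bounds each piece using only $|\nabla_d F|\le M$. The paper's version therefore needs merely boundedness of the gradient rather than continuity, a marginally weaker hypothesis; on the other hand, your continuity framing sidesteps the delicate point that the straight-line paths $a\to c_1$, $a\to c_2$, and $c_2\to c_1$ do not concatenate, so the paper's splitting of a single-component line integral is being applied in a somewhat informal way that your argument avoids entirely. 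Both proofs correctly make no analytic use of conditions~(b) and~(c); your explicit remark that upgrading from ``attribution difference $\to 0$'' to ``same diagnostic condition'' requires a local-constancy assumption on the interpretation map $I$ is an observation the paper's proof leaves implicit.
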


Proof is provided in Appendix \ref{sec:proof_equiv_attr_appendix}.

Theorem \ref{thm:equivalent_attributions} implies that a sparse sampling of reference points in the neighborhood $x \in X: F(x) \approx 1$ as baseline points can be used to represent normal modes.

\subsection{Choose a path from the anomaly to the baseline}
According to \cite{Sundararajan2017}, any path connecting the anomaly $x$ and its baseline point $x'$ is valid. However, to be consistent with findings from cognitive tests performed on human cohorts, described in Section \ref{sec:interpretablity}, it is most appropriate to prefer the L1 path (city block) or L2 path (Euclidean). According to the cognitive tests, human subjects preferred the L2 path when significant correlation is manifested in two or more dimensions.

\begin{proposition}
\label{prop:path}
\emph{Both L1 and L2 paths connecting the anomaly and baseline are  acceptable for interpretability.}
\end{proposition}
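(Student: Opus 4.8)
The plan is to reduce the claim to two ingredients: (i) that path-integrated gradients along either the L1 (city-block) or the L2 (straight-line) path inherit the attribution axioms of \cite{Sundararajan2017}, so both produce \emph{valid} blame vectors in the sense of Section~\ref{sec:conv}; and (ii) that both the L1 and the L2 metrics coincide with the distance structure humans natively use when categorizing, as reviewed in Section~\ref{sec:interpretablity}, so a path built on either metric yields an explanation the expert can interpret. First I would fix notation for the two candidate paths: the L2 path $\gamma^{(2)}(\alpha) = x' + \alpha(x - x')$ for $\alpha \in [0,1]$, and the L1 path as the concatenation of $D$ axis-aligned segments, the $d$-th of which moves coordinate $d$ from $x'_d$ to $x_d$ with the remaining coordinates held fixed. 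I would then verify the only structural hypothesis needed, namely that both are \emph{monotone} paths: along each segment every coordinate is a non-decreasing (or non-increasing) function of the path parameter, and in particular each coordinate of $\gamma^{(2)}$ is affine in $\alpha$.

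Given monotonicity, I would invoke the result of \cite{Sundararajan2017} that the path-integrated gradient $\int_0^1 \frac{\partial F}{\partial x_d}(\gamma(\alpha))\, \gamma_d'(\alpha)\, d\alpha$ along any monotone path satisfies Completeness (the attributions sum to $F(x) - F(x')$), Sensitivity, Linearity, and Implementation Invariance. Applying this to $\gamma^{(2)}$ and to the L1 path gives two blame vectors, each of which, after the usual sign normalization for the anomalous-versus-normal orientation, lies in the admissible set $\{b : b_d \in [0,1],\, \sum_d b_d \le 1\}$ of Section~\ref{sec:conv} and accounts for the entire score gap between $x$ and $x'$. This establishes that both paths are \emph{valid} in the algorithmic sense.

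For the interpretability half I would appeal to the generalization literature cited in Section~\ref{sec:interpretablity}: the experiments of \cite{shepard1957}, \cite{sheperd1987}, and \cite{nosofsky1986} indicate that human categorization operates over an internal (psychological) metric space in which dissimilarity is well approximated by L1 or L2 distance, with similarity decaying exponentially in that distance. Since the attributions produced by IG are exactly the marginal contributions accumulated as $x$ is deformed into $x'$ along the chosen path, building that path on either the L1 or the L2 metric makes the resulting displacement --- and hence the blame --- commensurate with the metric the expert natively reasons with, so the induced attribution is interpretable under either choice. I would additionally invoke Theorem~\ref{thm:equivalent_attributions} to note that each path's attribution is locally stable under small perturbations of the baseline, so the diagnostic reading $I(B(x,x'))$ is robust for both paths; combined with Proposition~\ref{distance} this completes the argument.

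The main obstacle is that IG attributions are genuinely path-dependent, so the L1 and L2 blame vectors need not be equal; the proposition therefore has to be read as an \emph{acceptability} statement (axiom satisfaction plus cognitive compatibility), not as a claim that the two explanations coincide. Making the interpretability half fully rigorous --- turning ``commensurate with the expert's metric'' into a precise property of $I \circ B$ --- is the delicate step; I would either keep that half at the level of the cognitive-science justification already used for Proposition~\ref{distance}, or, if a sharper statement is wanted, restrict attention to the separable or additive models of $F$ for which the L1 and L2 path integrals provably agree.
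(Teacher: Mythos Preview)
Your two-ingredient strategy matches the paper's own justification exactly: the paragraph preceding Proposition~\ref{prop:path} invokes (i) \cite{Sundararajan2017} for the validity of \emph{any} path and (ii) the cognitive-science findings of Section~\ref{sec:interpretablity} for the L1/L2 preference. The paper, however, offers no formal proof --- the proposition is stated as a design guideline, with the surrounding prose serving as the entire argument. Your elaboration (explicit parametrizations, the monotone-path axiom check, the appeal to Theorem~\ref{thm:equivalent_attributions} for baseline stability) goes well beyond what the paper supplies, and your closing caveat that the interpretability half cannot be made fully rigorous is precisely the level at which the paper itself leaves matters.

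One technical point worth tightening: ``the'' L1 path is not unique --- there are $D!$ axis-aligned concatenations from $x'$ to $x$, and the resulting path-integrated attributions generally depend on the coordinate ordering. If you want the L1 claim to sit at the same axiom level as the L2 one, you must either average over orderings (which pushes you toward a Shapley-type attribution) or simply acknowledge that any fixed ordering still satisfies Completeness, Sensitivity, Linearity, and Implementation Invariance but forfeits Symmetry-Preserving, which only the straight-line path enjoys. This does not undermine the acceptability reading you correctly identify as the intended one, but it is a wrinkle your write-up should flag explicitly.
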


\subsection{Apply an explanation function with the baseline}
\label{sec:explanation_with_baseline}
As defined in Section \ref{sec:conv}, an explanation function accepts an AD model $F$, an observation $x$, and the model’s score of the observation $F(x)$, and returns a variable attribution on the input data, $B(x)$. In this section, we propose multiple desirable properties of an explanation function, and then evaluate various methods for suitability. First, an explanation function must be \emph{contrastive}, as defined by \cite{miller2017}. In this work, we extend the definition with an approach that provides constructive explanations. In AD, observation $x$ is anomalous, so the suitable explanation would be based on a contrastive normal point $x'$, selected as discussed in \ref{sec:nearest_baseline} from the exemplar set.  Second, an explanation function must be \emph{conservative} (as in Definition 2, \cite{montavon2015})  or, equivalently, \emph{complete} (as in the the Completeness Axiom of \cite{Sundararajan2017} or Efficiency Axiom in \cite{sundararajan20b}, where the sum of attributions equals the difference in classification scores between the baseline and observed points: $\sum_{d \in D}B_d(x,x')=F(x')-F(x)$. Third, an explanation function for local explanations must satisfy \emph{sensitivity}, first in the sense of \cite{Sundararajan2017}, which states that any dimension that changes the prediction when its value is altered should also have a non-zero attribution. Further, any dimension that induces no change to the prediction when perturbed (i.e., dummy variable) should have a zero attribution. Fourth, the explanation should satisfy \emph{proportionality}\footnote{Proportionality in this work is different from Proportionality defined in \cite{Sundararajan2017}, since proportionality in the latter refers to a condition under which the dimensional components of the distance between a baseline point and an observation is proportional to the attribution.}; this is elaborated next.

\begin{definition}
\label{def:stronger_influence}
Given a path $P$ from $x$ to $x’$, dimension $u$ has a \emph{stronger influence} on $F$ than dimension $v$ over $P$ if the average (or overall) rate of change of $F$ along dimension $u$ is greater than the average (or overall) rate of change of $F$ along dimension $v$.
\end{definition}

If $P$ is the path defined by $z=x+ \alpha(x'-x)$ for $\alpha \in [0, 1]$, between $x$ and $x'$, then the overall rate of change of $F$ along dimension $u$ over path $P$ is $IG_{u}(x, x') \equiv \int_0^1(x'-x)_u \nabla_u F(x + \alpha(x'-x))\,d \alpha$. This is justified by taking $G(\alpha)=F(z)=F(x+ \alpha (x'-x))$,  and observing that
$$G'(\alpha)=\sum_u \nabla_u F \times \frac{\partial z_u}{\partial \alpha} = \sum_u \nabla_u F \times (x' - x)_u$$
that is, the sum of dimensional rates of change of $F$ over path $P$. Therefore, the instantaneous rate of change of $F$ at a point $z=x+ \alpha (x'-x)$ along dimension $u$ is $\nabla_u F \times (x'-x)_u$, and thus the overall rate of change along dimension $u$ is the \emph{integral} of the instantaneous rate of change of $F$ over the path. 

Therefore, Definition \ref{def:stronger_influence} becomes: Given a path $P=(x+\alpha(x'-x))$ from $x$ to $x'$, dimension $u$ has a \emph{stronger influence} on $F$ than dimension $v$ over $P$ if 
\begin{equation}
\label{eq:ig_stronger_influence}
\begin{split}
\int_0^1(x'-x)_u \nabla_u F(x + \alpha(x'-x))\,d\alpha \\
 > \int_0^1(x'-x)_v \nabla_v F(x + \alpha(x'-x))\,d\alpha
\end{split}
\end{equation}
that is, dimension $u$ has a \emph{stronger influence} on $F$ than dimension $v$ over $P$ if $IG_u(x,x')>IG_v(x,x')$.  

\begin{definition}
\label{def:proportional}
An explanation function $B(x,x')$ is \emph{proportional} on observation $x \in \mathbb{R}^D$ if for any dimension $u$ that has a stronger influence on $F$ than another dimension $v$ over a path $P$ from a point $x$ to a point $x'$,  then $B_u(x,x')>B_v(x,x')$. 
\end{definition}

In other words, dimensions that have a stronger influence on the anomaly prediction $F$ should have greater attributions than dimensions that have a weaker influence. Proportionality enables ranking and thresholding of dimensions.

We evaluated LIME \cite{ribeiro2016}, SHAP \cite{lundberg2017}, Layerwise Relevance Propagation (LRP) \cite{bach2015} \cite{binder2016}, Deep Taylor Decomposition (DTD) \cite{kauffmann2020}, One Class Deep Taylor Decomposition (OC-DTD) \cite{kauffmann2020} and Integrated Gradients (IG) against contrastive, conservative, sensitivity and proportionality desiderata. Details and jstification are provided in Appendix \ref{appendix:xai_disidiratum}. LIME, LRP, and most implementations of SHAP are not contrastive \cite{merrick2019}. DTD, OC-DTD make approximate the gradient as a linaerly, and are not guaranteed to satisfy proportionality. IG does not specify a fully defined method for choosing a baseline point. Instead, its authors provided general guidelines for selecting a good baseline, where a good baseline point should have nearly opposite scores as the point of interest (i.e., if $F(x) \approx 1$, then $F(x') \approx 0$, or vice-versa). Therefore, IG provides a contrastive explanation against the set of all baseline points that have significantly contrastive scores. Because it applies a path integral between $x$ and $x'$, IG accurately approximates the gradient, is conservative, and meets sensitivity criteria \cite{Sundararajan2017}. In addition, by computing the path gradient for each dimension, IG also satisfies proportionality (as shown in Theorem \ref{thm:ig_proportionality}). Therefore, because it satisfies each desideratum, we propose IG is a suitable explanation function for AD.

\begin{theorem}
\label{thm:ig_proportionality}
Integrated Gradients satisfies proportionality, as defined in Definition \ref{def:proportional}.
\end{theorem}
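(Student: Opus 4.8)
The plan is to observe that the Integrated Gradients attribution is, by construction, precisely the quantity in terms of which \emph{stronger influence} was formalized, so that proportionality collapses to an immediate comparison of those quantities. First I would fix the explanation function under consideration to be the one induced by IG on the straight-line path: for every dimension $d$, take $B_d(x,x') = IG_d(x,x')$, up to the order-preserving normalization used to map raw attributions into the admissible range $[0,1]^D$ (division by a common positive scalar, e.g.\ $\sum_d IG_d(x,x') = F(x') - F(x)$ when this difference is positive, which leaves the relative order of the components unchanged). I would also make explicit that the path $P$ appearing in Definition~\ref{def:proportional} is taken to be the segment $z = x + \alpha(x'-x)$, $\alpha \in [0,1]$, which is exactly the path along which $IG_d(x,x')$ is defined in Section~\ref{sec:explanation_with_baseline}.

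Next I would invoke the formalization of Definition~\ref{def:stronger_influence} carried out just before the theorem: the overall rate of change of $F$ along dimension $u$ over the segment $P$ is $\int_0^1 (x'-x)_u \nabla_u F(x+\alpha(x'-x))\,d\alpha = IG_u(x,x')$, so that the statement ``dimension $u$ has a stronger influence on $F$ than dimension $v$ over $P$'' means, by definition, exactly the inequality $IG_u(x,x') > IG_v(x,x')$ established in \eqref{eq:ig_stronger_influence}. Combining the two steps: whenever $u$ has a stronger influence than $v$ over $P$, we get $B_u(x,x') = IG_u(x,x') > IG_v(x,x') = B_v(x,x')$, which is precisely the conclusion required by Definition~\ref{def:proportional}. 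Hence IG is proportional.

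The only point that genuinely requires care — and the closest thing here to an obstacle — is consistency of the objects being compared: one must ensure that the path $P$ quantified over in Definition~\ref{def:proportional} is the same straight-line path used to define $IG_d$, and that the normalization taking raw IG values into $[0,1]^D$ is monotone, so that ranking dimensions by attribution coincides with ranking them by overall rate of change. Once these bookkeeping points are pinned down, the theorem is essentially a restatement of the definitions; I would keep the write-up short and resist the temptation to re-derive the path-integral identity, citing instead the derivation already given above \eqref{eq:ig_stronger_influence}.
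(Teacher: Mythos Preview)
Your proposal is correct and mirrors the paper's own proof almost exactly: identify $B_d(x,x')$ with $IG_d(x,x')$, recall that ``stronger influence'' was formalized precisely as $IG_u(x,x')>IG_v(x,x')$ via \eqref{eq:ig_stronger_influence}, and conclude $B_u>B_v$. Your extra remarks on path consistency and order-preserving normalization are sensible bookkeeping but go slightly beyond what the paper bothers to spell out.
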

Proof is provided in Appendix \ref{appendix:proof_proportionality}.


\section{Experiments}
\label{experiments}
In this section, we fix the anomaly detector $F$ and compare the proposes approach, based on IG against two alternative explainability methods, SHAP and LIME.

\textbf{Attribution Labels} To our knowledge, there are no standard datasets curated for evaluating blame attributions for AD; therefore, we adopt the following labeling method. In addition to the familiar true binary anomaly class label, $A(x) = [anomalous,not\,anomalous]$, we append a vector $\beta(x)$ of dimension $D$, where $\beta_d(x) = 1/n_A$ if $x_d$ is one of $n_A$ relevant, explanatory dimensions of the defect selected by expert technicians for fault diagnosis, and $0$ otherwise; with $\sum_{d \in D}\beta_d(x)=1$.\footnote{We apply equal weighting to each relevant dimension in $\beta(x)$, because our technician labelers have found it impractical to to assign relative importance weights/preference to the relevant dimensions.} To score the attribution for method $i$, $B^{(i)}(x)$, against the label $\beta(x)$ for anomalous observation $x$, we compute the \textbf{Blame Attribution Error} as the mean absolute difference: $\epsilon^{(i)}(x) = \sum_{d \in D}\left| B^{(i)}_d(x) - \beta_d(x)\right|/|D|$ for every point $x$ where $A(x)=anomalous$. 

\textbf{VAV dataset} (406 anomalous observations) A Variable Air Volume (VAV) device provides ventilation and heating using airflow dampers and a hot water heat exchanger, and are commonly installed in commercial office spaces. There are 12 dimensions reported by the VAV, including temperature and airflow setpoints and measurements. Due to an intermittent failure of its the temperature sensor, during normal business hours the VAV's airflow sensor reports values significantly higher than the airflow setpoint. The hot water valve is consistently open, reporting around 100\%, and the zone air temperature is below its setpoint. We trained one anomaly detector (AUC=0.99) based on \cite{sipple2020a} using unlabeled historic observations from a cohort of 203 similar VAVs.

\textbf{Aircraft dataset} (448 anomalous observations) This dataset contains 16 real-valued dimensions extracted from a small general aviation aircraft flight data recorder, on which the fuel pump partially failed in flight. During normal operations, the fuel pressure ranges between 50 and 80 psi, with only short bursts above 80. During the failure, the fuel pressure achieved pressures between 80 and 120 psi. We trained an anomaly detector based on \cite{sipple2020a} (AUC=0.94) with unlabeled earlier flight recordings of the same aircraft.

We compare the proposed method using IG, with two model-agnostic explanation functions, SHAP and LIME and compute the average and standard deviation of the explanation errors, shown in Table \ref{tbl:results-table}. In both datasets, IG yielded significantly lower attribution errors than SHAP and LIME. LIME appears to be vulnerable to larger because it of the local approximation around $x$, and in many cases the gradient in the neighborhood of the anomaly may be very small and noisy resulting in unstable attributions. We hypothesize that IG achieves the lowest anomaly attribution error because it traverses the entire gradient from the anomaly to the nearest baseline point avoiding local approximations.

\begin{table}[t]
\caption{Mean and Standard Deviations of attribution error values  \% for labeled anomaly explanation datasets. Highlighted values are the top-scoring detectors based on a 5\% significance threshold (Mann-Whitney U test).}
\label{tbl:results-table}
\begin{center}
\begin{sc}
\begin{tabular}{llll}
& IG & SHAP & LIME \\
\hline
Aircraft &\textbf{6.0$\pm$1.9}   &7.4$\pm$2.8      & 12.8$\pm$0.2 \\
VAV &\textbf{7.5$\pm$2.8} &13.1$\pm$1.0 &13.8$\pm$0.1  \\

\hline
\end{tabular}
\end{sc}

\end{center}
\vskip -0.1in
\end{table}

\section{Discussion and Future Work}
\label{sec:discussion}
This paper considers how XAI can be applied to AD to provide insightful, contrastive explanations that enable the expert to understand an anomalous observation. We proposed that explainability is an \emph{algorithmic} transformation from prediction to the explanation, and interpretability is a \emph{cognitive} transformation from explanation to the diagnostic condition. We have considered numerous explainability methods, and reviewed what aspects are important for a good explanation, inspired by the social sciences. We propose a novel method of choosing baseline points that enable contrastive explanations. We propose a method for evaluating blame attributions and show that IG has the lowest attrubution error in two real-world datasets.

While this paper proposes an approach to scoring blame attributions, additional work is needed in evaluating the accuracy and importance of a contrastive normal baseline.  It will be beneficial to demonstrate the general utility of this relationship across different types of systems and AD models, such as Appendix \ref{appendix_illustration}. With advances in wearable technology, is it  possible for an explainable anomaly detector to predict illnesses by simply wearing a few commodity sensors? 

\subsubsection{Acknowledgements} The authors would like to thank Klaus-Robert Müller and Ankur Taly for instructive and practical advice and for their detailed technical reviews, and the anonymous reviewers for identifying gaps and suggesting improvements.

\newpage
\bibliography{main}
\bibliographystyle{splncs04}

\newpage
\appendix
\onecolumn
\section{Expert personas for Explainable AI.}
\label{appendix_expert_personas}

As with explainability, interpretability can be subdivided by their distinguishing characteristics. While the model is responsible for generating an explanation, the expert is responsible for forming an interpretation. The important characteristics can be largely subdivided into the expert’s knowledge base, the expert’s goals, and the expert’s action space. Table \ref{tbl:expert_personas} shows how we  expand on the foundational definitions by \cite{ras2018} that distinguished between the expert and the secularus. We propose each user of the explanation is an expert of their own kind, and can be broadly classified into three personas: physicus, technicus, and secularus\footnote{We prefer the Latin terms over scientist, technician, and layperson to encompass a broad range of experts without specifying the type of system. For example, technicus is meant to include both technicians that fix devices and medical professionals that treat patients.}. Furthermore, we distinguish between a system-oriented expert and a model-oriented expert.

\emph{Physicus} interprets the explanation to discover natural laws, or uncover a deeper knowledge of the phenomenology of the observed system that produced the interesting observation, and hence, the expert’s goal is epistemological. \cite{samek2017} refers to the goal of the physicus as learning from the system. This expert’s knowledge base of the system is extensive, and is focused on discovery and expanding the existing knowledge base of the system. Like an astronomer, climatologist, or macro-economist, the system-oriented physicus may not have a direct influence on the system. In contrast, the model-oriented expert seeks to discover new knowledge of the model, perhaps discovering new methods and techniques that will yield better, more accurate, insightful, fair, reliable, and transparent models. 

Like physicus, \emph{technicus} has a detailed knowledge base of the system, but unlike physicus, the goal of technicus is to directly alter or influence the system in response to known conditions, and may choose a response, i.e. the goal is to improve the system (or model) \cite{samek2017}. In order to choose the response to the model’s explanation, technicus must choose an action or treatment in response to the explanation. Technicus is able to perform an effective diagnosis and treatment when the observed system is well understood and mostly static, such that past successful treatments can be repeated to yield similarly beneficial results. The model-oriented technicus trains, validates, and deploys the model without attempting to alter the system; the system oriented technicus uses the explanation to alter the system.

\emph{Secularus} is an observer with the least detailed knowledge of the system or model compared to either the technicus or the physicus, and has a limited capability to interpret the explanation. Secularus’s perspective is ego-centric: while not able to alter the system, secularus can choose a response that is beneficial to his interests. For example, secularus may interpret an explanation from a model observing the stock market, and may choose a response to buy or sell. Secularus may consider the current cloud cover, temperature, dew point, barometric pressure and decide to leave home with an umbrella, or may determine whether the model outputs are compliant with legislation \cite{samek2017}. 

Each system-oriented expert would like to know how the model’s decisions and explanations evolve over time. The physicus establishes governing laws that predict a future outcome by observing the prediction and explanation, the technicus anticipates a favorable change in the system following a treatment, and the secularus may choose their next action based on the explanation. In each case, the change in explanation over time plays a critical role. 

\begin{table}[t]
\caption{Three types of experts and how they interpret model explanations.}
\label{tbl:expert_personas}
\setlength{\arrayrulewidth}{0.1mm}

\renewcommand{\arraystretch}{1.3}
\begin{center}
\begin{small}
\begin{tabular}{ |p{3cm}|p{4.0cm}|p{4.0cm}|p{4.0cm}| }
\hline
\multicolumn{1}{|c|}{ } & \multicolumn{3}{|c|}{\textbf{Expert}} \\
\hline
 & \textbf{Physicus} & \textbf{Technicus} & \textbf{Secularus} \\
 & \emph{to discover} & \emph{to diagnose} & \emph{to decide} \\
\hline 
\textbf{System-oriented} 
& Researcher in econmocs, astronomy, biology, climatology...  
\begin{itemize} 
    \item[] \emph{"What makes the system exhibit this behavior?"}
\end{itemize}
& Medical doctor, automobile technician, power plant operator, mechanic, traffic monitor...       
\begin{itemize} 
\item[] \emph{“What went wrong and how can I fix the system?”}   
\item[] \emph{“Has the treatment improved the system’s condition?”} 
\end{itemize}
& Day trader, investor, medical patient, criminologist/police, driver, homeowner... 
\begin{itemize} 
\item[] \emph{“Should I buy shares in XYZ”}    
\item[] \emph{“Am I compliant with laws and regulations?”} 
\end{itemize}
\\
\hline
\textbf{Model-oriented} 
& AI researcher, Computer Scientist...
\begin{itemize} 
\item[] \emph{“Why is Deep Learning architecture X better than Y?”}
\end{itemize}
& AI practitioner, Software Engineer...
\begin{itemize} 
\item[] \emph{“When I deploy the model to production, will it be accurate, fair, trustworthy and reliable?”}
\end{itemize} 
& Law enforcement professional, hiring manager, passenger of self-driving vehicle... 
\begin{itemize} 
\item[]  \emph{“Do I trust the model enough to influence my decision?”}  
\item[] \emph{“Is my legal right to an explanation guaranteed?”}.  
\item[] \emph{“Does the model discriminate against any group?”} 
\end{itemize}
\\
\hline

\end{tabular}
\end{small}
\end{center}
\end{table}

\section{A clustering-based algorithm to selecting exemplar baseline set.}
\label{appendix:select_baseline}
Algorithm \ref{alg:select_baseline} provides details of the proposed method of selecting a baseline, given the training sample $X$, an anomaly detection classifier $F$, the maximum number $n$ of points per mode, and a tolerance threshold, $\epsilon$. 

While any clustering algorithm is suitable, density-based clustering such as DBScan \cite{ester1996} works well in practice since (1) it does not require the number of clusters to be specified,  (2) it performs well clustering arbitrarily shaped modes, and (3) it is robust to noisy data.

The baseline set $X'$ may then be considered the exemplar set and used as contrastive, normal points.

\begin{algorithm}
\caption{Select Multimodal Baseline}\label{alg:select_baseline}
\begin{algorithmic}[select_baseline]
\Procedure{Select Multimodal Baseline}{$X, F, n, \epsilon$} \Comment{normalized training set $X$, anomaly detector $F$, max points per node $n$, tolerance threshold  $\epsilon$}
  
  \State $\hat{X}=X:F(x \in X)>1-\epsilon$   \Comment{Choose observations with nearly normal scores.}
  \State $H = $\textbf{Cluster} $(\hat{X}, d_{Euclidean})$ \Comment{Cluster the points using L2 distance function.}
  \State $X'= $\textbf{Choose}$(n, H)$ \Comment{Randomly select up to $n$ points from each cluster in $H$.}
  \State \textbf{return} baseline set $X'$
\EndProcedure
\end{algorithmic}
\end{algorithm}


A side-by-side comparison of pair plots between the naive approach, as proposed in \cite{sipple2020a}, and the proposed mode clustering baseline selection algorithm is shown in Figure \ref{fig:selection_comparison}. The plot indicates that the proposed algorithm creates a baseline  sample that is less than half the size of the naive sampling strategy, yet provides much better coverage of the high-confidence region. 

\begin{figure}[htb]
  \centering
  \begin{minipage}[c]{0.48\columnwidth}
    \includegraphics[width=\columnwidth]{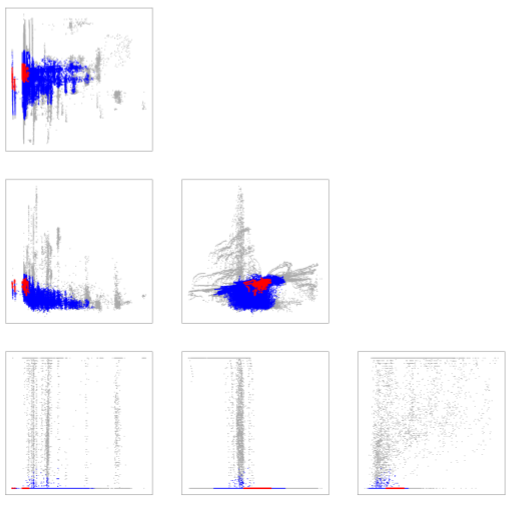}
  \end{minipage}
  \begin{minipage}[c]{0.48\columnwidth}
    \includegraphics[width=\columnwidth]{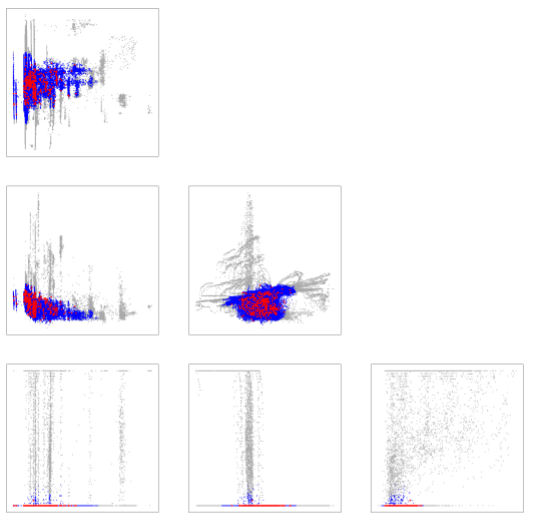}
  \end{minipage}
  \caption{(best viewed in color). Pairwise scatter plots of three-dimensional observations obtained from 36 VAV devices (discharge air temperature, zone air temperature, heating water valve percentage). Grey points represent low-confidence points with anomaly score $F(x)<0.9$, and are not eligible for baseline points; conversely, blue points are eligible to be chosen as baseline points with $F(x)\geq0.9$. The red points were selected as the baseline sample using (left) Naive sampling approach, and (b) proposed clustering-based approach. Note that (a) and (b) had the same observations $X$ and same anomaly detection classifier $F$. Even with a baseline size less than half of the naive baseline sample, the proposed mode-clustering approach provides much better coverage of the high-confidence regions. }
  \label{fig:selection_comparison}
\end{figure}

\section{Proofs}






\subsection{Proof for Equivalent Attributions}
\label{sec:proof_equiv_attr_appendix}
\begin{theorem}
\label{thm:equivalent_attributions_appendix}
The reference points $c_1$ and  $c_2$ will generate \emph{equivalent attributions} if the following conditions are met: (a) the points are close $c_1\approx c_2$, (b) equidistant from anomaly $a$, and (c) have the same classification score $F(c_1)=F(c_2)$. That is,  under those conditions, the difference of attributions on each dimension will tend to zero as $c_1$ and  $c_2$ become infinitesimally close, 

$\lim_{\|c_1-c_2\|\to 0} \left( IG_d(a,c_1)-IG_d(a,c_2)\right) =0$.
\end{theorem}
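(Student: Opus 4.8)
The plan is to express each term $IG_d(a,c_i)$ as its defining path integral and bound the difference $IG_d(a,c_1)-IG_d(a,c_2)$ by a quantity that manifestly vanishes as $\|c_1-c_2\|\to 0$. Recall that
$$IG_d(a,c_i)=\int_0^1 (c_i-a)_d\,\nabla_d F\bigl(a+\alpha(c_i-a)\bigr)\,d\alpha.$$
First I would write $\Delta IG_d := IG_d(a,c_1)-IG_d(a,c_2)$ and split it, via adding and subtracting a hybrid term, into (i) a piece capturing the change in the prefactor $(c_i-a)_d$ and (ii) a piece capturing the change in the integrated gradient $\nabla_d F$ along the two nearby line segments. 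Concretely, $\Delta IG_d = \int_0^1 \bigl[(c_1-a)_d-(c_2-a)_d\bigr]\nabla_d F(a+\alpha(c_1-a))\,d\alpha + \int_0^1 (c_2-a)_d\bigl[\nabla_d F(a+\alpha(c_1-a))-\nabla_d F(a+\alpha(c_2-a))\bigr]\,d\alpha.$

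For the first piece, note $(c_1-a)_d-(c_2-a)_d=(c_1-c_2)_d$, so its magnitude is at most $\|c_1-c_2\| \cdot \sup_{\alpha}|\nabla_d F|$, which tends to $0$ as $\|c_1-c_2\|\to0$ (using that $F$ is differentiable, hence $\nabla F$ is bounded on the compact segment). For the second piece, the two integration points $a+\alpha(c_1-a)$ and $a+\alpha(c_2-a)$ differ by $\alpha(c_1-c_2)$, whose norm is at most $\|c_1-c_2\|$; continuity of $\nabla_d F$ (differentiability of $F$) then forces the bracketed gradient difference to $0$ uniformly in $\alpha$, and the bounded prefactor $(c_2-a)_d$ does not disturb this. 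Hence both pieces vanish in the limit and $\lim_{\|c_1-c_2\|\to0}\Delta IG_d=0$. I would remark that hypotheses (b) equidistance and (c) equal classification score $F(c_1)=F(c_2)$ are not needed for the limit per se — they are the geometric conditions under which $c_1,c_2$ are both legitimate, interchangeable exemplars — and that the conclusion, combined with Definition \ref{equivalent_attributions}, gives equivalent attributions whenever the interpretation map $I$ is not pathologically discontinuous (i.e., the common attribution does not lie exactly on a decision boundary between diagnostic conditions).

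The main obstacle is not the estimate itself, which is a routine $\varepsilon$–$\delta$ continuity argument, but rather being careful about what regularity of $F$ is actually invoked: the clean statement wants only $F\in C^1$ on a neighborhood of the convex hull of $\{a,c_1,c_2\}$, so that $\nabla F$ is uniformly continuous and bounded there; I would state this assumption explicitly at the start of the proof. A secondary subtlety worth a sentence is the passage from "$\Delta IG_d\to 0$ for every $d$" to "$I(B(a,c_1))=I(B(a,c_2))$": this is immediate if $I$ is locally constant around the limiting attribution, which is the generic situation and is what Proposition \ref{small_perturbation} implicitly assumes.
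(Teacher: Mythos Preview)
Your proof is correct and follows essentially the same strategy as the paper: write $IG_d(a,c_1)-IG_d(a,c_2)$, split it into a ``prefactor-difference'' piece $(c_1-c_2)_d\times(\text{integral})$ and an ``integral-difference'' piece, and bound each using regularity of $\nabla F$. The only technical divergence is in how the second piece is handled: the paper invokes a chained-integral identity $\int_a^{c_1}=\int_a^{c_2}+\int_{c_2}^{c_1}$ and then lets the domain $[c_2,c_1]$ shrink (so only boundedness of $\nabla F$ is used), whereas you keep the parametrization over $[0,1]$ and compare integrand values at nearby points $a+\alpha(c_i-a)$, which needs uniform continuity of $\nabla_d F$ (your explicit $C^1$ hypothesis). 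Your version is arguably cleaner since the paper's chained-integral step tacitly changes the straight-line path; your observations that hypotheses (b) and (c) are not used in the analytic limit, and that the passage to $I(B(a,c_1))=I(B(a,c_2))$ requires $I$ to be locally constant near the limiting attribution, are not in the paper but are worthwhile clarifications.
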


\begin{proof}
Explicitly define IG as a function of two points, anomaly $a$ and baseline $b$ for dimension $d$: $IG_d(a, b)=(b-a)_d\int_a^b\nabla_dF(x)\,d\alpha$. We’ll drop the $d$ subscript for simplicity, but dimensionality is implied. Let $\delta(c_1,c_2)=IG_d(a, c_1)-IG_d(a, c_2)$ be the difference in attribution with respect to anomaly $a$ on dimension $d$ for two points $c_1$ and $c_2$: 

\begin{equation} \label{eq:dif_attrib}
\begin{split}
\delta_{d}(c_1,c_2)  = (c_1-a)_{d}\int_a^{c_1}\nabla_{d}F(x)\,d\alpha -(c_2-a)_d\int_a^{c_2}\nabla_{d}F(x)\,d\alpha
\end{split}
\end{equation}

Apply the standard property of chained integrals to \ref{eq:dif_attrib}: $\int_u^wf(x)\,d\alpha=\int_u^vf(x)\,d\alpha+\int_v^wf(x)\,d\alpha$

\begin{equation}
\begin{split}
\delta_d(c_1, c_2) = (c_1-a)_d\left[ \int_a^{c_2}\nabla_dF(x)\,d\alpha + \int_{c_2}^{c_1}\nabla_dF(x)\,d\alpha\right]   -(c_2-a)_d\int_a^{c_2}\nabla_dF(x)\,d\alpha
\end{split}
\end{equation}

Rearrange terms:

\begin{equation}
\begin{split}
\delta_d(c_1, c_2) = (c_1-c_2)_d\int_{x=a}^{x=c_2}\nabla_dF(x)\,d\alpha  + (c_1-a)_d\int_{x=c_2}^{x=c_1}\nabla_dF(x)\,d\alpha 
\end{split}
\end{equation}

The gradient $\nabla F$ is assumed bounded inside the region between $a$ and $c_2$, that is, there is a constant $M$  where $|\nabla_{d}F|\leq M$ inside the region between $a$ and $c_2$. 

Therefore, 
\begin{equation}
\begin{split}
\left|\int_{a}^{c_2} \nabla F(x)\,d \alpha\right| \leq \int_{a}^{c_2}\left|\nabla F(x)\right|\,d\alpha \leq \int_a^{c_2}M\,d\alpha=M(c_2-a)  
\end{split}
\end{equation}
Hence,
\begin{equation}
\begin{split}
\left|(c_1 - c_2)_{d}\int_{a}^{c_2}\nabla_d F(x)\,d\alpha\right|  \leq \left|(c_1- c_2)M(c_2-a)_d\right| \rightarrow 0
\end{split}
\end{equation}

Similarly, $\int_{c_2}^{c_1} \nabla_d F(x)\, d\alpha \rightarrow 0$ as $\|c_1-c_2\|\rightarrow 0$ because $|\nabla_d F|$ is bounded in the region between $c_1$ and $c_2$. 

This implies that $$(c_1-a)_d\int_{c_2}^{c_1}\nabla_d F(x)\,d\alpha \rightarrow 0$$ as $\|c_1-c_2\| \rightarrow 0$.

It follows that $\delta_d(c_1, c_2) \rightarrow 0$ as $\|c_1-c_2\| \rightarrow 0$, i.e., $$\lim_{\|c_1-c_2\| \rightarrow 0} \left(IG_d(a,c_1)-IG_d(a,c_2)\right)=0$$
\end{proof}

\subsection{Proof of Proportionality}
\label{appendix:proof_proportionality}
\begin{theorem}
\label{thm:ig_proportionality_proof}
IG satisfies proportionality, as defined in Definition \ref{def:proportional}.
\end{theorem}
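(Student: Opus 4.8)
The plan is to unwind the definitions, after which proportionality of IG turns out to be essentially built into the way Definition~\ref{def:stronger_influence} was phrased. In the discussion preceding Definition~\ref{def:proportional}, the ``overall rate of change of $F$ along dimension $u$'' over the straight-line path $P$ given by $z = x + \alpha(x'-x)$ was identified with $IG_u(x,x') = \int_0^1 (x'-x)_u\,\nabla_u F(x+\alpha(x'-x))\,d\alpha$. Hence the hypothesis of Definition~\ref{def:proportional} — that dimension $u$ has a stronger influence on $F$ than dimension $v$ over $P$ — is, by Definition~\ref{def:stronger_influence}, exactly the inequality $IG_u(x,x') > IG_v(x,x')$. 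So the whole task reduces to relating the ordering of the $IG_d$ values to the ordering of the blame values $B_d$ that IG assigns.

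First I would pin down the IG explanation function. Because IG is conservative (the Completeness Axiom of~\cite{Sundararajan2017}), its attribution is the raw integrated gradient itself, $B_d(x,x') = IG_d(x,x')$, so that $\sum_{d} B_d(x,x') = F(x') - F(x)$ as required in Section~\ref{sec:explanation_with_baseline}. With this identification, the conclusion of Definition~\ref{def:proportional}, namely $B_u(x,x') > B_v(x,x')$, is literally the inequality $IG_u(x,x') > IG_v(x,x')$ that the hypothesis already supplies, and the theorem follows; I would also state explicitly that this uses the specific path $P = x + \alpha(x'-x)$, since it is the path-integral form of $IG_u$ that makes ``overall rate of change along dimension $u$'' well defined. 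If one instead prefers a normalized form $B_d = IG_d(x,x')/(F(x') - F(x))$, the same conclusion holds because $F(x') - F(x) > 0$ whenever $x$ is anomalous and $x'$ is a normal exemplar chosen as in Section~\ref{sec:nearest_baseline}, and dividing through by a fixed positive scalar is an order-preserving transformation.

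I expect the only real obstacle — and it is bookkeeping rather than mathematics — to be reconciling the sign of $IG_d$ with the stipulation in Section~\ref{sec:conv} that a blame value lies in $[0,1]$, since a raw integrated gradient can be negative. The clean fix is to restrict the proportionality comparison to the dimensions that contribute positively to the anomaly score (precisely the dimensions of diagnostic interest), on which $B_d = IG_d \ge 0$; the monotonicity step that carries the strict inequality from $IG$ to $B$ is unaffected. The substantive content of the proof is therefore just the first paragraph's observation that IG's per-dimension attribution coincides with the per-dimension ``overall rate of change'' used to define \emph{stronger influence} — the inequality then transfers for free.
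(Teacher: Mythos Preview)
Your proposal is correct and follows essentially the same route as the paper: identify $B_d(x,x')$ with $IG_d(x,x')$, observe that ``stronger influence'' was defined as $IG_u(x,x')>IG_v(x,x')$, and conclude $B_u>B_v$ trivially. The paper's proof does not discuss normalization or the $[0,1]$ range issue you raise, but the core argument is identical.
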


\begin{proof}
Consider an anomalous observation $x$ (i.e., $F(x)$ is near $0$), and a normal baseline point $x'$. Let $u$ and $v$ be two dimensions where dimension $u$ has a stronger influence on $F$ than dimension $v$ over path $P(\alpha)=x+\alpha(x'-x)$ from $x$ to $x'$, we need to prove that $B_u(x)>B_v(x)$.

By definition of IG, $B_u(x,x')$ with respect to exemplar $x’$ is $IG_u(x,x')$, and $B_v(x,x')$ is $IG_v(x,x')$. As indicated Equation \ref{eq:ig_stronger_influence}, the fact that dimension $u$ has a stronger influence on $F$ than dimension $v$ means that $IG_u(x,x')>IG_v(x,x')$, which trivially implies that $B_u(x,x')>B_v(x,x')$.   
\end{proof}
\textbf{Remark}: The above derivations give a new insight into IG, i.e., IG defines the blame/attribution to each dimension/feature $u$ to be the overall rate of change of the classifier $F$ along that dimension/feature over a path from an anomaly $x$ to an baseline point $x'$.

\section{Evaluation of Explainability Methods for Anomaly Detection}
\label{appendix:xai_disidiratum}
We consider which explanation functions satisfy contrastive, conservative, sensitivity, and proportionality desiderata.  
All explanation functions that do not apply a contrastive baseline point, such as LIME, SHAP, and LRP, do not meet the contrastive desideratum, and are eliminated. It is worth mentioning that recent work has adapted these approaches to contrastive baselines. Such as, \cite{merrick2019} proposed reformulating SHAP with blended coalition sets from multiple baseline points to form an averaged attribution. The authors clarify that when using multiple baseline points one must attend to attribution errors when the data distribution is multimodal, and propose techniques to group the baseline points by clustering them by their attribution. Additional investigation may be required to determine how well this approach works under arbitrary multimodal distributions.

First-order Taylor Decomposition techniques \cite{baehrens2010a}, \cite{montavon2015}, apply a Taylor expansion of the gradients from a root point $x’$: $F(x')-F(x)=\frac{\partial F}{\partial x}|_{x'}(x'-x)+\epsilon$, where $\epsilon \approx 0$ denotes the second- and higher-order terms. Because these techniques apply a linear approximation for the gradient, they are only suitable for comparing points in close proximity. Since the observed point and the contrastive baseline points are on extreme ends of the classification range, (i.e., $F(x') \approx 1$ and $F(x) \approx 0$), the gradient approximation at either point is, in general, not representative of the gradient along the path between $x$ and $x'$, and thus first-order Taylor approximations do not satisfy sensitivity and proportionality.

Deep Taylor Decomposition (DTD) \cite{montavon2015} is an extension of the first-order Taylor approximation for deep neural networks, where the first-order approximation is applied for each neuron, progressively from the output node to the input nodes. DTD is conservative and also applies a positivity constraint. However, its authors select a singular baseline (root) point at the origin by applying the positivity constraint. Therefore, by specifying the root point, no exemplar set is applied, and therefore, DTD is not contrastive\footnote{Even though DTD approximates the decision surface, it is not clear how a contrastive explanation may be formed.}.

A notable application of DTD to anomaly detection also worth considering is One-Class Deep Taylor Decomposition (OC-DTD) \cite{kauffmann2020}. The authors proposed first training a One-Class SVM as an anomaly detector, then \emph{neuralizing} the OC-SVM into a neural network, and then applying DTD for  variable attribution. The authors proposed a relevance model to approximate the layerwise attributions (relevances). The proposed relevance model computes a power-raised distance from the support vectors, rather than contrasting the observed point with an exemplar set. Since the basis of comparison are the support vectors and not an exemplar, OC-DTD also is not contrastive.

Different from DTD and OC-DTD, IG \cite{Sundararajan2017} does not specify a fully defined method for choosing a baseline point. Instead, its authors provided general guidelines for selecting a good baseline, where a good baseline point should have nearly opposite scores as the point of interest (i.e., if $F(x) \approx 1$, then $F(x') \approx 0$, or vice-versa). Therefore, IG provides a contrastive explanation against the set of all baseline points that have significantly contrastive scores. Because it applies a path integral between $x$ and $x'$, IG accurately approximates the gradient, is conservative, and meets sensitivity criteria. In addition, by computing the path gradient for each dimension, IG also satisfies proportionality (as shown in Theorem \ref{thm:ig_proportionality_proof}). Therefore, because it is contrastive, conservative, and satisfies sensitivity and proportionality, we propose selecting IG as the contrastive explanation function.

\section{A practical illustration of interpretable anomaly detection}
\label{appendix_illustration}
We illustrate explained anomaly detection with the Variable Air Volume (VAV) device, illustrated in Figure \ref{fig:vav}. The VAV maintains office room temperature $T_{room}$ above heating and below cooling setpoints, $T_{heating}$, $T_{cooling}$. Fresh air at temperature $T_{air,in}$ flows through the VAV. Hot water circulates through a heat exchanger to heat the air, $T_{air,out}$. Air flow is controlled with a mechanical damper, and water flow is regulated with a mechanical valve.

\begin{figure}[htb]
  \centering
    \includegraphics[width=\columnwidth]{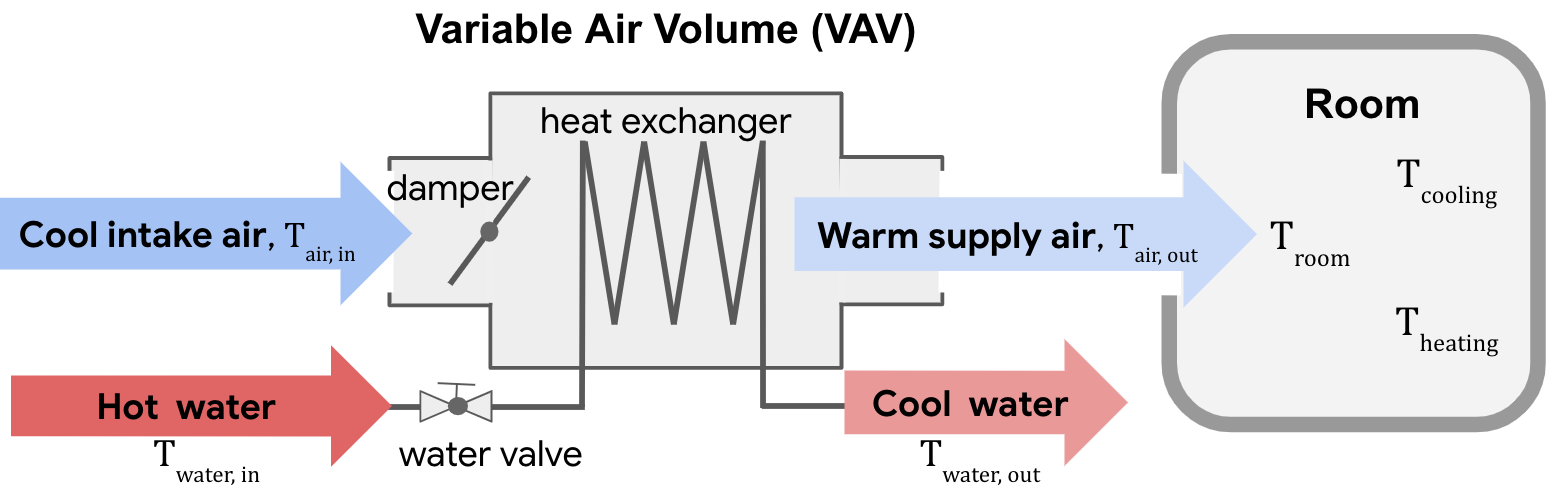}
  \caption{Schematic of a Variable Air Volume (VAV) device. }
  \label{fig:vav}
\end{figure}

\label{sec:root_cause_interpretation}
\textbf{Anomaly Diagnosis.} We trained a neural network with negative sampling anomaly detector \cite{sipple2020a} on 50 identical VAVs, and followed the approach from Section \ref{sec:approach} to generate explanations. The attribution time series of an actual water flow failure is shown in Figure \ref{fig:timeseries}.  Initially, the device exhibits normal behavior as indicated by the anomaly score. However, at 11:40 am the supply water temperature drops, causing VAV’s anomaly score to increase. The attribution is assigned to the heating water valve, which responds by opening beyond its normal condition. At 12:00, as the zone air temperature starts to cool, the attributions change indicating that the supply air flow rate changes to compensate for the insufficient water temperature.

\begin{figure}[htb]
  \centering
    \includegraphics[width=\columnwidth]{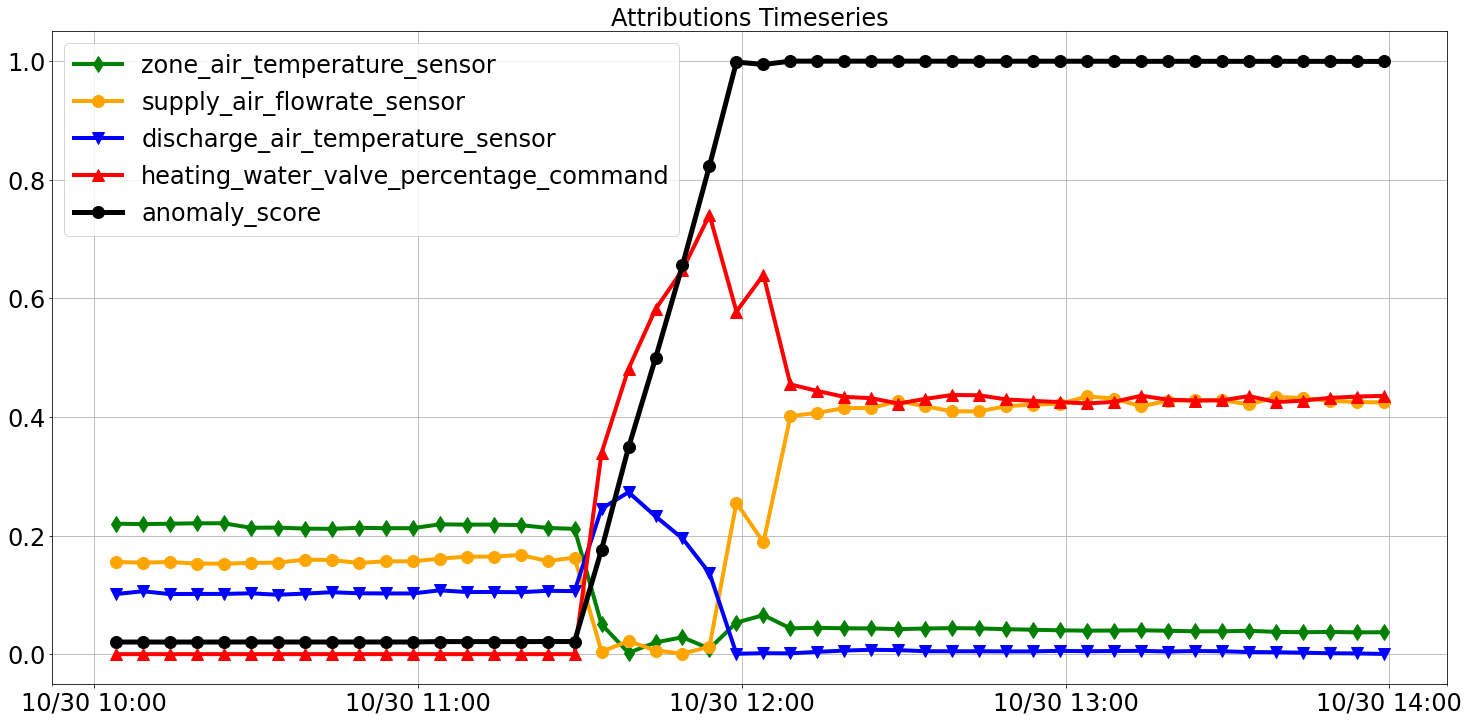}
  \caption{Time series of anomaly scores and blame attributions on an anomalous VAV. The black line represents the anomaly score, $1-F(x)$, where 0 is normal and 1.0 is anomalous (this reverse in convention makes it easier to visualize). The colored lines indicate the attributions for each observation.}
  \label{fig:timeseries}
\end{figure}

A closer inspection shortly after the failure, Figure \ref{fig:blames}, reveals at 11:54 74\% blame is assigned to the heating water valve with an anomalous value of 100\% (expected 50\%). However, at 12:24, the air supply flowrate confounds the root cause by increasing flowrate to 1.2, (expected 0.5) cubic feet per minute, resulting shared blame (43\% and 41\%).

This example illustrates two rules of attributions. First, an insufficient number of independent observable dimensions leads to an ambiguous diagnosis. Second, over time the evolving state of a system may obscure the root cause, making it necessary to attend to the attributions from the earliest anomalous observations.  



\begin{figure}[htb]
  \centering
  \begin{subfigure}
  \centering
    \includegraphics[width=0.51\columnwidth]{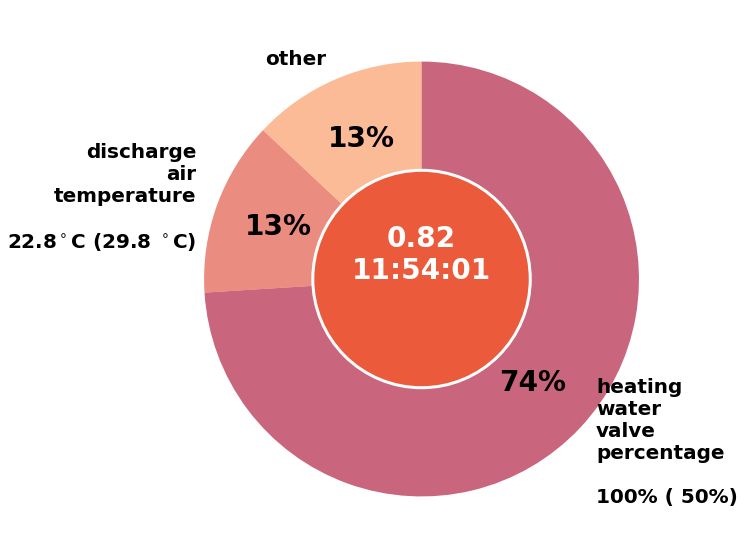}
  \end{subfigure}
  \hfill
  \begin{subfigure}
  \centering
    \includegraphics[width=0.46\columnwidth]{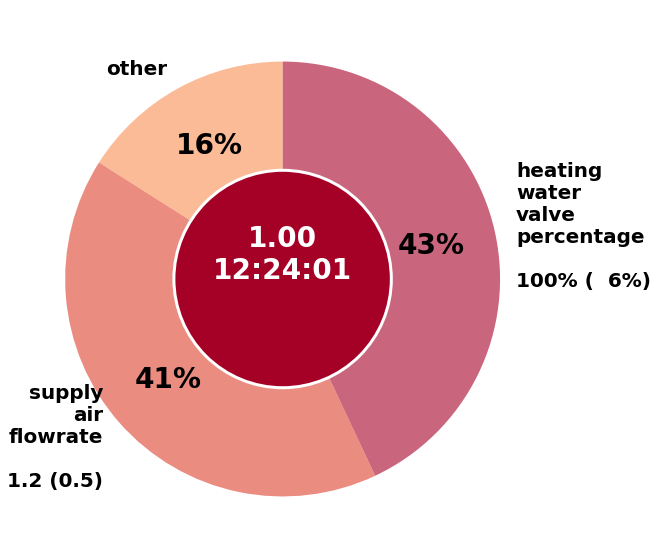}
  \end{subfigure}
  \caption{Blames at 11:54 and 12:24. The anomaly score, $1- F(x)$, and the time are shown in the interior circle, the blame attributions are shown on the outside ring, and the anomalous values is shown next to their baseline values in parenthesis. }
  \label{fig:blames}
\end{figure}

\textbf{Empirical Evaluation}
We evaluated attributions for actual failures on real VAVs. Trained technicians labeled each instance with the subcomponent that was replaced or reconfigured as part of the repair. Each example was labeled a \emph{success} if the attributions correctly identified at least one dimension associated with the fault. For example, if the damper failed, a successful attribution must include air flow set point or measurement. Of the 53 examples, 51 were successful attributions (96\% accuracy). The two unsuccessful were associated with devices that were repaired well after the initial failure occurred.

\end{document}